\newtheorem{theorem}{Theorem}
\newtheorem{definition}{Definition}
\newtheorem{lemma}{Lemma}
\DeclareMathOperator*{\argmin}{arg\,min}
\def\*#1{\mathbf{#1}}
\def\Pa#1{\text{Pa}#1}
\def\f{f_{\textnormal{causal}}}
\def\g{g_{\textnormal{causal}}}
\def\h{h_{\textnormal{causal}}}
\DeclareMathOperator{\Ima}{Im}
\title{Invariance \& Causal Representation Learning: Prospects and Limitations}
\author{%
  Simon Bing$^{1}$ \quad Jonas Wahl\thanks{Equal contribution.} $^{\,1,2}$ \quad Urmi Ninad$^{*1,2}$ \quad Jakob Runge$^{2,1}$ \\
    \\
  $^{1}$Technische Universität Berlin, Berlin, Germany\\
  $^{2}$Institute of Data Science, German Aerospace Center, Jena, Germany\\
  \texttt{\{bing,wahl,urmi.ninad\}@tu-berlin.de}, \texttt{jakob.runge@dlr.de} \\
}
\begin{document}

\maketitle

\begin{abstract}
    In causal models, a given mechanism is assumed to be invariant to changes of other mechanisms. While this principle has been utilized for inference in settings where the causal variables are observed, theoretical insights when the variables of interest are latent are largely missing. We assay the connection between invariance and causal representation learning by establishing impossibility results which show that invariance alone is insufficient to identify latent causal variables. 
    Together with practical considerations, we use
    these theoretical findings to highlight the need for additional constraints in order to identify 
    representations by exploiting invariance.
\end{abstract}
\section{Introduction}
Inferring high-level causal variables from low-level measurements is a problem garnering increased attention in fields interested in understanding epiphenomena that cannot be directly measured and where controlled experiments are not possible due to practical, economic or ethical considerations, for instance in healthcare \citep{johansson_generalization_2022}, biology \citep{lopez_learning_2023} or climate science \citep{tibau_spatiotemporal_2022}. This problem of causal representation learning \citep{scholkopf_toward_2021} has been shown to be fundamentally underconstrained \citep{locatello_challenging_2019}, leading to various approaches exploring which assumptions lead to algorithms that identify the latent causal variables.

Recent works either restrict the underlying causal model \citep{lachapelle_disentanglement_2022,buchholz_learning_2023}, the transformation causal variables undergo 
\citep{ahuja_interventional_2023,lachapelle_synergies_2023}, or both \citep{squires_linear_2023}. They include interventional or counterfactual data \citep{ahuja_interventional_2023,zhang_identifiability_2023,squires_linear_2023,buchholz_learning_2023,brehmer_weakly_2022}, use supervisory signals such as time structure \citep{hyvarinen_nonlinear_2017,halva_hidden_2020,yao_learning_2021} or knowledge of intervention targets \citep{lippe_citris_2022,lippe_causal_2022}. 

We explore the applicability of another type of inductive bias for identifiable representation learning, namely the invariance of causal mechanisms \citep{peters_elements_2017}. 
\citet{haavelmo_probability_1944} first showed that causal variables lead to predictive models that are invariant under interventions,
and since causal representation learning is tasked with recovering precisely these variables, we investigate if and to which degree the principle of invariance can be used as a signal to recover latent causal variables from observations.

While invariance has been used for causal inference \citep{peters_causal_2016, buhlmann_invariance_2018, meinshausen_causality_2018}, none of these works considers the setting where we only have access to observations that are related to the underlying causal variables by some unknown transformation.
To the best of our knowledge, we present the first theoretical results pertaining to the identifiability of causal representations using the principle of invariance.

Our contributions are summarized as follows:
\begin{itemize}
    \item Drawing on the link between distributional robustness and causality, we formalize the setting in which we study the connection between invariance and causal representation learning.
    \item We establish impossibility results proving the necessity of additional assumptions to achieve identifiability.
    \item Based on these impossibility results and practical considerations, we contemplate further constraints and map out future work towards identifiable algorithms based on invariance.
\end{itemize}



\section{Problem setting}
\label{sec:prob}

\begin{wrapfigure}{r}{0.425\textwidth}
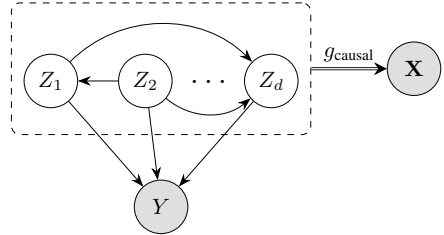

    \vspace{-13pt}
    \centering
    \resizebox{.42\textwidth}{!}{
    \tikz[latent/.append style={minimum size=0.85cm},obs/.append style={minimum size=0.85cm},det/.append style={minimum size=1.15cm}, wrap/.append style={inner sep=2pt}, on grid]{
        \node[latent](z_1){\(Z_1\)};
        \node[latent, right=1.5cm of z_1](z_2){\(Z_2\)};
        \node[latent, right=2cm of z_2](z_d){\(Z_d\)};
        \node[obs, below=2cm of z_1, xshift=1.75cm](y){\(Y\)};
        \node[right=of z_2, node font=\Large]{\(\dots\)};
        \edge[-{Stealth[length=2mm, width=1.5mm]}]{z_2}{z_1};
        \draw[-{Stealth[length=2mm, width=1.5mm]}] (z_1) to [out=45, in=135] node [above=-3pt](e1){} (z_d);
        \draw[-{Stealth[length=2mm, width=1.5mm]}] (z_2) to [out=-40, in=220] (z_d);
        \edge[-{Stealth[length=2mm, width=1.5mm]}]{z_1, z_2, z_d}{y};
        \plate[dashed]{SCM}{(z_1)(z_2)(z_d)(e1)}{};
        \node[obs, right=1.2cm of SCM.east](X){\(\*X\)};
        \draw[-{Stealth[length=2mm, width=2mm]}, double] (SCM.east) to node [above](e2){\(\g\)} (X)
    }
    }
    \caption{We consider an SCM with variables \((Z_1,\dots,Z_d)\) and \(Y\). Observed variables are represented by shaded nodes, indicating that we do not observe \((Z_1,\dots,Z_d)\), but only their transformation \(\*X=\g(\*Z)\).}
    \vspace{-25pt}
    \label{fig:dgp}
\end{wrapfigure}
Intuitively, our problem setting can be motivated by considering the prediction problem with a target \(Y\) and observations \(\*X\) in multiple environments. We assume that there is an underlying, causal representation of the observations---denoted with \(\*Z\)--- whose constituents are causes of the prediction target $Y$ and interact with each other through a structural causal model (SCM). This representation $\*Z$ is what we are interested in finding.
\paragraph{Notation.}
We denote scalar variables in normal face (\(x\)) and use bold face for vector-valued variables (\(\*x\)). We capitalize random variables (\(Y\)), and write the values they take in lower case (\(y\)). Matrices are denoted capitalized and bold (\(\*M\)) and are explicitly introduced to avoid confusion with vector-valued random variables. The sequence of integers from \(1\) to \(n\) is denoted with \([n]\).

\paragraph{Data generating process.}

Let \((Z_1, \dots, Z_{d+1})\) denote a set of random variables. W.l.o.g. we call \(Z_{d+1}\) the \emph{target} variable and rename it \(Y\), denoting the remaining \(d\) variables with \(\*Z = (Z_1, \dots, Z_d)\). Assume an SCM defined over the random vector \((\*Z, Y)\) inducing the joint distribution \(P\) over \((\*Z, Y)\). Assume \(Y\) is not a parent of any \(\*Z\). Since \(P\) is induced by an SCM, it factorizes as
\begin{align}
    P(\*Z, Y) = \prod^{d+1}_{i=1} P(Z_i \mid Z_{\Pa_i}),
\end{align}
where \(\Pa_i \subset [d+1] \setminus i\) denotes the parents of variable \(Z_i\). We refer to \citep{pearl_causality_2009} for an in-depth definition of SCMs. 

Beyond the additive noise assumption we do not place any parametric constraints on the causal mechanism of each variable, i.e. each structural equation can be written as \(Z_i := f_i(Z_{\text{Pa}_i}) + \varepsilon_i\), where \(\varepsilon_i; i \in [d+1]\) denote the exogenous, independent noise terms, which are assumed to have zero mean. 
Since the causal mechanism of the target \(Y\) is of particular interest in a predictive setting, we denote it with \(f_{\text{causal}} := f_{d+1}\).

We do not directly measure \(\*Z\) and only assume to observe \(\*X \in \mathbb{R}^p\), where \(\*X = \g(\*Z)\)
is a transformation of the causal variables \(\*Z\) by the injective, deterministic (and potentially nonlinear) function \(\g\). We assume \(p \geq d\). Notice that \(Y\) is \emph{not} transformed by \(\g\); we assume that the target variable is directly observed. The set of observed variables is therefore denoted by \((\*X, Y)\). \cref{fig:dgp} depicts a graphical representation of this data generating process.

We assume to observe \((\*X, Y)\) across multiple environments, where subsets \(S \subseteq [d]\) of the underlying, latent variables \(\*Z\) have undergone an intervention in each environment. We model interventions with do-interventions \citep{pearl_causality_2009} (also called hard interventions), which set the structural equations of the variables that are targeted by an intervention to constant values, allowing us to write
\begin{align}
        Z_j := a_j \quad \text{for } j \in S,
\end{align}
where \(\*a \in \mathbb{R}^{\left|S\right|}\). Each intervention on the subset of variables \(S\) with value \(\*a\) induces a new distribution over \((\*Z, Y)\) which, following the notation introduced by \citet{meinshausen_causality_2018}, is denoted by \(P^{(\text{do})}_{\*a, S}\).
We assume \(Y\) is never among the set of intervention targets and the mixing function \(\g\) does not change between environments.

\paragraph{Objective.}
Although we explicitly define a target variable, we stress that our main goal is not to learn a good predictive model of \(Y\), but rather to recover the latent variables \(\*Z\), i.e. we focus on the representation learning aspect of the setting described above. Our aim is to probe how the auxiliary task of predicting \(Y\) from transformations of latent variables in multiple environments can help in recovering the unobserved causal variables.
Our formal objective is to invert the mixing function \(\g\) in order to recover the latent variables \(\*Z\) from observations \(\*X\).
Since latent variables that are equal to the ground truth up to permutation and element-wise rescaling can give rise to the same observations \(\*X\) \citep{zhang_identifiability_2023}, we define an equivalence relation over this class of latents. Consequently, we define recovering the latent causal variables up to this equivalence as our notion of identifiability.
Equivalent representations are referred to as \emph{causally disentangled}.
\begin{definition}[Causally Disentangled Representations, \citet{khemakhem_variational_2020,lachapelle_disentanglement_2022}]\label{def:disent}
    A learned representation \(\hat{\*Z}\) is causally disentangled w.r.t. to the ground truth representation \(\*Z\) if there exists an invertible diagonal matrix \(\*D\) and a permutation matrix \(\*P\), s.t. \(\hat{\*Z} = \*D \*P \*Z\) almost surely.
\end{definition}

\section{Invariance for causal structure learning}
The connection between (predictive) invariance and causality is long established: \citet{haavelmo_probability_1944} was the first to formalize that a model which predicts a target from its direct causes is invariant under interventions on any other covariates of the system. In the language of SCMs
this means that the conditional distribution
\(P(Y|\*Z_{\Pa_Y})\)
remains invariant under any interventions on \(\*Z\). This principle is also referred to as autonomy \citep{aldrich_autonomy_1989}, modularity \citep{pearl_causality_2009} and independence of cause and mechanism \citep{peters_elements_2017}.

More recently, the opposite direction has been explored, namely how invariance can be leveraged as a signal to infer causal structures and mechanisms, pioneered by the work of \citet{peters_causal_2016}, who exploit the principle of invariance of causal mechanisms to infer the direct causes of a target \(Y\), assuming direct observations of the causal variables \(\*Z\). A particularly interesting line of works draws a connection between distributional robustness and causality \citep{meinshausen_causality_2018, buhlmann_invariance_2018} by considering the problem
\begin{align}\label{eq:rob_opt}
    \min_f \sup_{Q \in \mathcal{Q}} \mathbb{E}_{(\*Z, Y) \sim Q}\left[(Y - f(\*Z))^2\right],
\end{align}
where \(\mathcal{Q}\) denotes some set of interventional distributions. \citet{rojas-carulla_invariant_2018} establish that the causal predictor \(\f\) is a solution for adversarially chosen \(\mathcal{Q}\), \citet{christiansen_causal_2022} investigate under which choices of \(\mathcal{Q}\), \(\f\) remains a solution and \citet{meinshausen_causality_2018} states that \(\f\) is the \emph{unique} solution to this problem when \(\f\) is linear and \(\mathcal{Q}\) is the set of interventions on \emph{all} variables except \(Y\), with arbitrary strength. None of the mentioned works consider settings where \(\*Z\) is latent.

As a first step towards our main theoretical findings, we extend the results of \citet{meinshausen_causality_2018} by showing that the causal mechanism of the target \(\f\) is the unique solution to \cref{eq:rob_opt} for general, nonlinear \(\f\) when the set of interventions \(\mathcal{Q}\) contains interventions on all covariates \(\*Z\). Assuming interventions on all variables in \(\*Z\) can be understood as a diversity condition on the observed environments.

\begin{lemma}[]\label{lem:rob_opt}
    Assume the general SCM presented in \cref{sec:prob} as the data generating process and consider the robust optimization problem described by \cref{eq:rob_opt}. Let \(\mathcal{Q}^{(\textnormal{do})} := \bigr\{P^{(\textnormal{do})}_{\*a,[d]}; \*a \in \mathbb{R}^d\bigr\},\) i.e. the set of do-interventions on \emph{all} variables, except \(Y\), with arbitrary strength.
    Then, the causal mechanism of the target \(Y\), \(\f\), is the unique optimizer of \cref{eq:rob_opt}, i.e.
    \begin{align*}
        f_{\textnormal{causal}} = \argmin_f \sup_{Q \in \mathcal{Q}^{(\textnormal{do})}} \mathbb{E}_{(\*Z, Y) \sim Q}\left[(Y - f(\*Z))^2\right].
    \end{align*}
\end{lemma}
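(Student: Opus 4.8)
The plan is to exploit the fact that a do-intervention on \emph{all} of \(\*Z\) renders \(\*Z\) deterministic, so that the risk in each environment collapses into a squared-bias term plus the irreducible noise variance; the adversarial supremum then forces any optimizer to agree with \(\f\) everywhere. Throughout I write \(\f\) also for its extension to a function \(\mathbb{R}^d \to \mathbb{R}\) obtained by ignoring the coordinates outside \(\Pa_Y\), so that \(\f(\*z) = f_{d+1}(\*z_{\Pa_Y})\); this matches the abuse of notation already in the lemma statement, where the minimization ranges over functions of the full \(\*Z\).

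First I would fix an arbitrary candidate \(f : \mathbb{R}^d \to \mathbb{R}\) and an arbitrary intervention value \(\*a \in \mathbb{R}^d\), and evaluate the inner expectation under \(Q = P^{(\textnormal{do})}_{\*a,[d]}\). Since \(Y\) is never an intervention target, modularity of the SCM guarantees that the structural assignment \(Y := \f(\*Z) + \varepsilon_{d+1}\)---and in particular the zero-mean law of \(\varepsilon_{d+1}\)---is unchanged in this environment; together with \(\*Z = \*a\) almost surely under \(Q\) this gives \(Y = \f(\*a) + \varepsilon_{d+1}\). Expanding the square and cancelling the cross term via \(\mathbb{E}[\varepsilon_{d+1}] = 0\) then yields
\begin{align*}
    \mathbb{E}_{(\*Z,Y)\sim Q}\!\left[(Y-f(\*Z))^2\right] \;=\; \bigl(\f(\*a)-f(\*a)\bigr)^2 + \mathrm{Var}(\varepsilon_{d+1}).
\end{align*}

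Next I would take the supremum over \(\mathcal{Q}^{(\textnormal{do})}\), i.e. over all \(\*a \in \mathbb{R}^d\), obtaining
\begin{align*}
    \sup_{Q\in\mathcal{Q}^{(\textnormal{do})}}\mathbb{E}_{(\*Z,Y)\sim Q}\!\left[(Y-f(\*Z))^2\right] \;=\; \mathrm{Var}(\varepsilon_{d+1}) + \sup_{\*a\in\mathbb{R}^d}\bigl(\f(\*a)-f(\*a)\bigr)^2.
\end{align*}
The variance term is a constant independent of \(f\), so minimizing the left-hand side over \(f\) is equivalent to minimizing the remaining nonnegative supremum. That supremum equals \(0\) exactly when \(f(\*a) = \f(\*a)\) for every \(\*a \in \mathbb{R}^d\); hence \(\f\) attains the global minimum \(\mathrm{Var}(\varepsilon_{d+1})\), while any \(f\) that differs from \(\f\) at even a single point \(\*a_0\) incurs value at least \(\bigl(\f(\*a_0)-f(\*a_0)\bigr)^2 + \mathrm{Var}(\varepsilon_{d+1})\), which is strictly larger. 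This gives both that \(\f\) is an optimizer and that it is the unique one.

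I do not anticipate a genuine obstacle here: the crux is the modelling observation that interventions on all covariates of arbitrary strength let the adversary probe \(f\) pointwise on all of \(\mathbb{R}^d\), which is precisely what lets us drop the linearity assumption of \citet{meinshausen_causality_2018} (whose interventions leave some covariates random and therefore cannot pin down \(f\) off the observational support without a parametric form). The only points requiring care are (i) reading the \(\argmin\) as being over a function class on which this pointwise comparison is meaningful, and (ii) the implicit finiteness assumption \(\mathrm{Var}(\varepsilon_{d+1}) < \infty\), without which every \(f\) is trivially an optimizer and the claim is vacuous.
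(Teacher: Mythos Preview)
Your proposal is correct and follows essentially the same approach as the paper: both decompose the risk under a full do-intervention into the irreducible noise variance plus a squared bias term (the cross term vanishing because \(\varepsilon_{d+1}\) has mean zero and \(\*Z=\*a\) is deterministic), then argue that the adversarial supremum over \(\*a\in\mathbb{R}^d\) forces any minimizer to coincide with \(\f\) everywhere. Your presentation is slightly more direct---you compute the closed-form risk for each \(\*a\) and then take the supremum explicitly, whereas the paper frames the same computation as a proof by contradiction---but the underlying argument is identical.
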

\begin{proof}
    Suppose that \(f \neq \f\), which implies that there exists at least one \(z = (z_1, \dots, z_d)\) such that \(f(z) \neq \f(z)\).
    Now, consider the decomposition of the objective
    \begin{align*}
        \mathbb{E}_Q\bigr[(Y-f(\*Z))^2\bigr] &= \mathbb{E}_Q\bigr[(\f(\*Z)-f(\*Z)+\varepsilon_Y)^2\bigr]\\\
        &= \mathbb{E}_Q\bigr[(\f(\*Z)-f(\*Z))^2\bigr] + \mathbb{E}_Q\bigr[\varepsilon_Y^2\bigr] + 2 \mathbb{E}_Q\bigr[(\f(\*Z)-f(\*Z))\varepsilon_Y\bigr].
    \end{align*}
    For any interventional distribution \(Q \in \mathcal{Q}^{(\text{do})}\) where all \(Z_j\)s are intervened upon, we notice that \(\varepsilon_Y\) is independent of all \(Z_j\)'s. Hence, for any such interventional distribution 
    \begin{align*}
        2 \mathbb{E}_Q\bigr[(\f(\*Z)-f(\*Z))\varepsilon_Y\bigr] = 2(\f(\*a)-f(\*a))\mathbb{E}_Q\bigr[\varepsilon_Y\bigr] = 0,
    \end{align*}
    i.e. the last term in the decomposition vanishes. Here we used that \(\varepsilon_Y\) has expectation zero. Next, we focus on the first term in the decomposition. We want to find an interventional distribution \(Q\) s.t.
    \begin{align*}
        \mathbb{E}_Q\bigr[(\f(\*Z)-f(\*Z))^2\bigr] >0.
    \end{align*}
    To do so, we simply choose \(\*a\) such that \(\f(\*a) \neq f(\*a)\). We know such a choice of \(\*a\) exists since \(\f\) and \(f\) are not equal by assumption. For this particular intervention we have
    \begin{align*}
        \mathbb{E}_Q\bigr[(\f(\*Z)-f(\*Z))^2\bigr] = \mathbb{E}_Q\bigr[(\f(\*a)-f(\*a))^2\bigr] = (\f(\*a)-f(\*a))^2 > 0.
    \end{align*}
    Thus, 
    \begin{align*}
        \mathbb{E}_{Q}[(Y-f(\*Z))^2]>Var(\varepsilon_Y),
    \end{align*}
    for any $Q \in \mathcal{Q}^{(\text{do})}$. The supremum is therefore also strictly larger than \(Var(\varepsilon_Y)\).
    For \(\f = f\) 
    \begin{align*}
        \mathbb{E}_{Q}[(Y-f(\*Z))^2]=Var(\varepsilon_Y),
    \end{align*}
    which also holds for the supremum, and we conclude that \(\f\) is the unique optimizer of \cref{eq:rob_opt}.
\end{proof}




As we are interested in establishing identifiability results for representation learning, this uniqueness result provides a potentially fruitful starting point. Notice that the above result assumes direct access to the variables \(\*Z\), while our problem setting of interest is characterized by the central assumption of only observing transformations \(\*X = \g(\*Z)\) of the underlying causal variables. We investigate the implications of this transformation in the next section.

\section{Invariance for causal representation learning}\label{sec:inv_crl}
To finally arrive at the problem setting we are interested in, we introduce the representation function \(g\) to the optimization problem presented in \cref{eq:rob_opt}, recalling that \(\*X=\g(\*Z)\). Now, consider the extended problem
\begin{align}\label{eq:rob_opt_mix}
    \min_{f,g} \sup_{Q \in \mathcal{Q}} \mathbb{E}_{(\*Z, Y) \sim Q}\left[(Y - f(g^{-1}(\*X)))^2\right].
\end{align}
Notice that the expectation is still taken over the joint distribution of \((\*Z, Y)\), i.e. the different environments which we consider still arise from interventions on the underlying latent variables \(\*Z\).

If solving \cref{eq:rob_opt} allows us to uniquely recover \(\f\), can solving the extended problem in \cref{eq:rob_opt_mix} allow us to draw similar conclusions about \(\g\)?

\subsection{Causal mechanism and representation are jointly unique}
As a first uniqueness result, we show that the joint function composed of the causal mechanism \(\f\) and the inverse of the ground-truth representation function \(\g^{-1}\) is the unique solution to \cref{eq:rob_opt_mix}.
\begin{lemma}[]\label{lem:rob_opt_mix}
    Assume the data generating process in \cref{sec:prob} and consider the  optimization problem described in \cref{eq:rob_opt_mix}. Let \(\mathcal{Q}^{(\textnormal{do})} := \bigr\{P^{(\textnormal{do})}_{\*a,[d]}; \*a \in \mathbb{R}^d\bigr\},\) i.e. the set of do-interventions on \emph{all} underlying variables \(\*Z\), except \(Y\), with arbitrary strength. Define \(h := f \circ g^{-1}: \mathbb{R}^p \to \mathbb{R}\) and let \(\Ima(\cdot)\) denote the image of a function.
    Then, the composed function
    \(\h := (\f \circ \g^{-1})\) is the unique optimizer of \cref{eq:rob_opt_mix} on \(\Ima(\g)\), i.e.
    \begin{align*}
        \h = (f_{\textnormal{causal}} \circ g^{-1}_{\textnormal{causal}}) \in \argmin_h \sup_{Q \in \mathcal{Q}^{(\textnormal{do})}} \mathbb{E}_{(\*Z, Y) \sim Q}\left[(Y - h(\*X))^2\right],
    \end{align*}
    and any other minimizer \(h'\) that satisfies \(h' \equiv \h\) on \(\Ima(\g)\).
\end{lemma}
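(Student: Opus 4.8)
The plan is to reduce \cref{lem:rob_opt_mix} to \cref{lem:rob_opt} by exploiting the injectivity of \(\g\). As in the statement, I would first pass to the reformulation of \cref{eq:rob_opt_mix} as a minimization over the single function \(h = f \circ g^{-1} \colon \mathbb{R}^p \to \mathbb{R}\). The key observation is that the expectation in \cref{eq:rob_opt_mix} is taken over \((\*Z, Y)\) while \(\*X = \g(\*Z)\) is a deterministic function of \(\*Z\), so for any candidate \(h\) we may rewrite
\begin{align*}
    \mathbb{E}_{(\*Z,Y)\sim Q}\bigl[(Y - h(\*X))^2\bigr] = \mathbb{E}_{(\*Z,Y)\sim Q}\bigl[(Y - (h\circ\g)(\*Z))^2\bigr].
\end{align*}
Thus the objective depends on \(h\) only through the composition \(h \circ \g\), equivalently only through the restriction of \(h\) to \(\Ima(\g)\); this is precisely why uniqueness can be claimed only on \(\Ima(\g)\), and why any \(h'\) agreeing with \(\h\) there is equally optimal.

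Next I would argue that the substitution \(\tilde f := h \circ \g\) gives a bijection between the functions \(\Ima(\g) \to \mathbb{R}\) (the relevant data of \(h\)) and the functions \(\mathbb{R}^d \to \mathbb{R}\): since \(\g\) is injective it is a bijection onto \(\Ima(\g)\), so given any \(\tilde f\) we recover the corresponding restriction via \(h = \tilde f \circ \g^{-1}\) on \(\Ima(\g)\), and conversely. Under this change of variables the inner objective of \cref{eq:rob_opt_mix} becomes exactly the inner objective of \cref{eq:rob_opt}, namely \(\mathbb{E}_{(\*Z,Y)\sim Q}[(Y - \tilde f(\*Z))^2]\), and the supremum over \(Q \in \mathcal{Q}^{(\textnormal{do})}\) is unchanged. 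I would then invoke \cref{lem:rob_opt}: for this \(\mathcal{Q}^{(\textnormal{do})}\) the unique minimizer over \(\tilde f\) is \(\tilde f = \f\). Translating back through the bijection, the unique optimal restriction is \(h|_{\Ima(\g)} = \f \circ \g^{-1} = \h\), which shows both that \(\h\) attains the minimum and that every minimizer agrees with \(\h\) on \(\Ima(\g)\); conversely, every \(h'\) with \(h'|_{\Ima(\g)} = \h\) achieves the same optimal value, giving the full characterization in the statement.

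The argument has no genuine obstacle, but a few bookkeeping points need care: making precise that \(\g^{-1}\) is defined only on \(\Ima(\g)\), so \(\h\) and the claimed minimizers are pinned down only there; verifying that the change of variables \(h \leftrightarrow \tilde f\) is truly onto the full space of functions \(\mathbb{R}^d \to \mathbb{R}\), so that no minimizer is lost in the reduction; and noting that, since the reduction is pointwise, it inherits whatever (minimal) regularity assumptions on the candidate functions are already implicit in \cref{lem:rob_opt} and requires nothing further. The subtlest of these is the first — ensuring the conclusion is correctly scoped to \(\Ima(\g)\) rather than all of \(\mathbb{R}^p\), which is exactly what the ``any other minimizer \(h'\)'' clause records.
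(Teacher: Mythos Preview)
Your proposal is correct. The route differs from the paper's in a mild but genuine way: the paper reproves \cref{lem:rob_opt} from scratch in the transformed setting, expanding \(\mathbb{E}_Q[(Y-h(\g(\*Z)))^2]\) via the bias--variance decomposition, killing the cross term using independence of \(\varepsilon_Y\) under the full do-intervention, and then exhibiting an intervention value \(\*a\) with \(\*b=\g(\*a)\in\Ima(\g)\) at which the residual is strictly positive whenever \(h\neq\h\) on \(\Ima(\g)\). You instead perform a clean reduction: observe that the objective depends on \(h\) only through \(\tilde f:=h\circ\g\), use injectivity of \(\g\) to set up a bijection between restrictions \(h|_{\Ima(\g)}\) and functions \(\tilde f:\mathbb{R}^d\to\mathbb{R}\), and invoke \cref{lem:rob_opt} as a black box. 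Your approach is more modular and makes the role of injectivity and of the restriction to \(\Ima(\g)\) structurally transparent; the paper's approach is more explicit and self-contained, so a reader sees the mechanism (zero-mean noise, independence under full intervention) again in the new notation. Either is fine; your bookkeeping remarks about scoping to \(\Ima(\g)\) and surjectivity of the change of variables are exactly the points that need to be stated.
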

To prove this Lemma, we need to find a \(\*b\) s.t. \(h(\*b) \neq \h(\*b)\). Notice that \(\h\) takes as argument \(\*X = \g(\*Z)\) while we can only intervene directly on \(\*Z\). Thus, we can only find \(\*b \in \Ima(\g)\) and consequently the statement \(h(\*b) \neq \h(\*b)\) only holds on the image of \(\g\). Taking this into account, the proof (presented in \cref{app:proof}) follows the same structure as the proof of \cref{lem:rob_opt}.

\subsection{Overparametrization of the unconstrained setting}\label{ssec:overparam}
While the previous uniqueness result regarding the composed function \((\f \circ \g^{-1})\) is a first promising direction in relating the solution of a distributional robustness problem to inverting the representation function \(\g\), we quickly see that this uniqueness result does not yet constrain the individual components \(f\) and \(g\) of the solution enough.

\begin{theorem}[]\label{thm:imp1}
    Consider the data generation process presented in \cref{sec:prob}. Without additional assumptions, the distributional robustness problem described by \cref{eq:rob_opt_mix} does not suffice to identify the underlying causal variables up to the equivalence class detailed in \cref{def:disent}.
\end{theorem}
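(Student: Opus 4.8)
The plan is to exploit \cref{lem:rob_opt_mix} together with the observation that the objective in \cref{eq:rob_opt_mix} depends on the pair $(f,g)$ \emph{only} through the composition $h=f\circ g^{-1}$. Concretely, for any bijection $\psi:\mathbb{R}^d\to\mathbb{R}^d$ the reparametrization $g\mapsto \g\circ\psi^{-1}$, $f\mapsto \f\circ\psi^{-1}$ leaves the composed map unchanged: $(\f\circ\psi^{-1})\circ(\g\circ\psi^{-1})^{-1}=\f\circ\psi^{-1}\circ\psi\circ\g^{-1}=\f\circ\g^{-1}=\h$ on $\Ima(\g)$. Hence each such pair attains the same value of the objective as $(\f,\g)$, namely the minimal value $\mathrm{Var}(\varepsilon_Y)$ (by \cref{lem:rob_opt_mix}). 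I would first write this out carefully, checking the routine book-keeping: $g_\psi:=\g\circ\psi^{-1}$ is injective as a composition of injective maps, $f_\psi:=\f\circ\psi^{-1}$ is a well-defined function, and $g_\psi^{-1}=\psi\circ\g^{-1}$ on $\Ima(\g)=\Ima(g_\psi)$, so all standing assumptions of \cref{sec:prob} are respected. The conclusion is that the solution set of \cref{eq:rob_opt_mix} \emph{contains} the entire orbit $\{(\f\circ\psi^{-1},\g\circ\psi^{-1}):\psi\text{ a bijection of }\mathbb{R}^d\}$.

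Next I would translate this into a statement about the recovered latent variables. The representation associated with the candidate solution $g_\psi$ is $\hat{\*Z}=g_\psi^{-1}(\*X)=\psi\bigl(\g^{-1}(\g(\*Z))\bigr)=\psi(\*Z)$. To defeat identifiability in the sense of \cref{def:disent}, it then suffices to exhibit a single bijection $\psi$ that does not coincide with any map $\*Z\mapsto\*D\*P\*Z$ (invertible diagonal $\*D$, permutation $\*P$) on a set of positive $P$-probability: for $d\geq 2$ take $\psi$ to be a generic invertible linear map such as a nontrivial rotation, and for $d=1$ take a nonlinear bijection such as $\psi(z)=z^{3}$. For any such $\psi$, the pair $(f_\psi,g_\psi)$ is a minimizer of \cref{eq:rob_opt_mix} whose recovered representation $\hat{\*Z}=\psi(\*Z)$ is provably not causally disentangled with respect to $\*Z$, which establishes the theorem.

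I expect the main obstacle to be not the algebra of the reparametrization, which is immediate, but ensuring the counterexample genuinely escapes the equivalence class, i.e.\ that $\psi(\*Z)\neq\*D\*P\,\*Z$ holds \emph{almost surely} under $P$ and not merely as an identity of functions. This needs a mild richness condition on the support of $\*Z$ (automatically satisfied under the additive-noise structure of \cref{sec:prob} when the exogenous noise densities are supported on all of $\mathbb{R}$), or can be arranged directly by picking $\psi$ that differs from every scaled permutation on an open set, which necessarily intersects the support. A secondary remark I would include is that the freedom used here is precisely the gap left open by \cref{lem:rob_opt_mix}: the composition $\h$ is pinned down, but the way it splits into a predictor $f$ and an inverse mixing $g^{-1}$ is entirely unconstrained, so $\hat{\*Z}$ is identified only up to an arbitrary invertible reparametrization — which is why further constraints on $f$ or $g$, the subject of the following section, are indispensable.
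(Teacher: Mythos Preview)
Your proposal is correct and follows essentially the same approach as the paper: both exploit that the objective depends on $(f,g)$ only through $h=f\circ g^{-1}$, insert an arbitrary invertible map $\Psi$ (your $\psi$) via $\hat f=\f\circ\Psi^{-1}$, $\hat g^{-1}=\Psi\circ\g^{-1}$, and observe that $\Psi$ is unconstrained so $\g^{-1}$ cannot be recovered up to the equivalence of \cref{def:disent}. Your version is in fact more careful than the paper's, since you explicitly exhibit a $\psi$ outside the scaled-permutation class and address the ``almost surely'' qualifier in \cref{def:disent}, whereas the paper simply asserts that an arbitrary invertible $\Psi$ suffices.
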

\begin{proof}  

    While \((\f \circ \g^{-1})\) has been shown to be the unique solution to the optimization problem described by \cref{eq:rob_opt_mix}, this does not directly imply the respective uniqueness of its components \(f\) and \(g\). To see this, consider any invertible map \(\Psi : \mathbb{R}^d \rightarrow \mathbb{R}^d\) and write
    \begin{align*}
        \f \circ \g^{-1} = \underbrace{\f \circ \Psi^{-1}}_{:=\hat{f}} \circ \underbrace{\Psi \circ \g^{-1}}_{:= \hat{g}^{-1}}.
    \end{align*}
    Thus, the tuple  \((\hat{f},\hat{g})\) with \(\hat{f} := \f \circ \Psi^{-1}\) and \(\hat{g}^{-1} := \Psi \circ \g^{-1}\) also gives rise to the solution $h_{\text{causal}} = \f \circ \g^{-1} = \hat{f} \circ \hat{g}^{-1}$ of \cref{eq:rob_opt_mix}. 
    
    Our goal is to recover \(\g^{-1}\) up to the equivalence class described in \cref{def:disent}, but we see that \(\hat{g}^{-1} = \Psi \circ \g^{-1}\) is a solution to our considered problem, where \(\Psi\) can be \emph{any} invertible transformation. We therefore conclude that our considered problem setting is underconstrained and we cannot identify \(\g^{-1}\) without additional assumptions.
\end{proof}
This result is unsurprising, given that we add a degree of freedom to the original problem in \cref{eq:rob_opt}, in the form of the function \(g\), without adding further constraints. 
As a result our problem becomes overparametrized and we can no longer uniquely recover both functions \(\f\) and \(\g\).

Note that if our goal is to find a predictive model that maps observations \(\*X\) to a target \(Y\), which is robust to distribution shifts, this impossibility result is not an issue, since only the \emph{composition} \((f \circ g^{-1})\) matters to solve \cref{eq:rob_opt_mix}.
Similar findings are shown in \citep{arjovsky_invariant_2020}, underlining that predicting optimally under distribution shift does not require the causal representation.

\subsection{Necessity of additional assumptions}\label{ssec:lin_f}
So far, we have not imposed any parametric constraints on \(\f\) or \(\g\). The impossibility result described in \cref{thm:imp1} however implies that we require further assumptions to make progress towards identifying the latent causal variables. Since we want our results to hold for general \(\g\), we refrain from beginning with assumptions on the representation function. Rather, we will investigate how parametric assumptions on \(\f\) may be used to constrain class of functions \(g\) that solve \cref{eq:rob_opt_mix}. 

Notice that the functions that solve \cref{eq:rob_opt_mix}, \(\hat{f} = \f \circ \Psi^{-1}\) and \(\hat{g}^{-1} = \Psi \circ \g^{-1}\), cannot be chosen independently of each other, but are connected via the map \(\Psi\). This connection motivates our reasoning behind constraining \(\f\): for certain parametric choices of \(\f\) (and accordingly \(\hat{f}\)) perhaps only a constrained set of maps \(\Psi\) admits a solution to \cref{eq:rob_opt_mix}. If this is the case and we effectively constrain \(\Psi\), we might be able to find a constraint on \(\f\) such that only transformations of the form \(\Psi = \*D \*P\), where \(\*D\) is a diagonal matrix and \(\*P\) is a permutation matrix, which would result in recovering the causally disentangled \(\hat{g}^{-1} = \*D \*P \g^{-1}\), according to \cref{def:disent}.

\paragraph{Linear causal mechanism.}
A first natural assumption to impose is linearity of \(\f\), and correspondingly of \(\hat{f}\), in the hopes of constraining \(\Psi\) to be a linear invertible map. This would allow us to make substantial progress towards recovering the causal variables up to permutation and rescaling, by first recovering the ground truth representation up to linear equivalence and then employing tactics to undo this linear mixing, following a common approach in causal representation learning \citep{ahuja_interventional_2023,zhang_identifiability_2023,lachapelle_synergies_2023}.

We explore the implications of assuming linearity of \(\f\) with an illustrative example. Assume \(Y := \theta_1 Z_1 + \theta_2 Z_2 + \varepsilon_Y\), i.e. the two variable case where \(d=2\). Since we assume \(\f\) to be linear, we also constrain the search space of \(\hat{f}\) to the class of linear functions. Recall that \(\hat{f} := \f \circ \Psi^{-1}\). Does the assumed linearity of \(\hat{f}\) and \(\f\) now constrain \(\Psi\) to also be a linear map? Writing out
\begin{align*}
    \hat{f} = \f \circ \Psi^{-1} = \theta_1 \psi^{-1}_1 + \theta_2 \psi^{-1}_2,
\end{align*}
where \(\psi^{-1}_i\) denotes the \(i\)th component of \(\Psi^{-1}\), we see that e.g. by choosing \(\theta_1=1, \theta_2=0\) only \(\psi^{-1}_1\) is constrained to be linear, while \(\psi^{-1}_2\) remains wholly unconstrained. We generalize this result to arbitrary choices of \(\theta\) in the following theorem.

\begin{theorem}[]\label{thm:imp2}
    Assume that \(\*Z,\*X,Y,\f,\g\) follow the definitions in \cref{sec:prob} with $d \geq 2$ and that additionally $0 \neq \f: \mathbb R^d \to \mathbb R$ is a linear function. Then, there exists an invertible nonlinear function $\Psi: \mathbb R^d \to \mathbb R^d$ such that $\hat{f} = \f \circ \Psi$ is linear. Moreover, this function $\Psi$ is not unique and can be chosen arbitrarily (that is, only constrained to be invertible) on a $d-1$-dimensional subspace of $\mathbb R^d$. In particular, the tuple $(\hat{f},\hat{g})$ with $\hat{g}:= \g \circ \Psi$ gives rise to the solution $h_{\text{causal}}$ of \cref{eq:rob_opt_mix} in that $h_{\text{causal}} = \hat{f} \circ \hat{g}^{-1}$.  
\end{theorem}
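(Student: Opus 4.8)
The plan is to exhibit, for any nonzero linear $\f$, a nonlinear bijection $\Psi$ of $\mathbb R^d$ that leaves $\f$ unchanged upon precomposition, i.e. $\f\circ\Psi=\f$; linearity of $\hat f:=\f\circ\Psi$ is then automatic, and the construction will visibly leave a $(d-1)$-dimensional direction of freedom. Concretely, let $K:=\ker\f$, a linear subspace of dimension $d-1\geq 1$ (using $\f\neq 0$ and $d\geq 2$), fix any $\*v$ with $\f(\*v)=1$, and use the decomposition $\mathbb R^d=K\oplus\mathbb R\*v$ to write each $\*z$ uniquely as $\*z=\*k_{\*z}+\f(\*z)\,\*v$ with $\*k_{\*z}\in K$. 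For an arbitrary bijection $\beta:K\to K$ I would set $\Psi(\*z):=\beta(\*k_{\*z})+\f(\*z)\,\*v$. Then $\Psi$ is a bijection of $\mathbb R^d$, with inverse $\*w\mapsto\beta^{-1}(\*k_{\*w})+\f(\*w)\,\*v$, and, using linearity of $\f$ together with $\beta(\*k_{\*z})\in K=\ker\f$ and $\f(\*v)=1$,
\[
  \f\bigl(\Psi(\*z)\bigr)=\f\bigl(\beta(\*k_{\*z})\bigr)+\f(\*z)\,\f(\*v)=0+\f(\*z)=\f(\*z),
\]
so $\hat f=\f\circ\Psi=\f$ is linear.

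For the remaining claims, observe that $\Psi$ restricted to $K$ equals $\beta$, so $\Psi$ is nonlinear whenever $\beta$ is; choosing $\beta$ to act as $t\mapsto t^3$ in one of the $d-1$ coordinates of $K\cong\mathbb R^{d-1}$ and as the identity in the rest gives such a $\beta$. Since $\beta$ may be replaced by any bijection of $K$ without affecting either the invertibility of $\Psi$ or the identity $\f\circ\Psi=\f$, the map $\Psi$ is non-unique and is unconstrained apart from invertibility on the $(d-1)$-dimensional subspace $K=\ker\f$. I would also note that $\Psi$ can be taken to be a homeomorphism, or even a diffeomorphism, by choosing $\beta$ accordingly; this is not needed for the statement but shows the obstruction is not an artefact of admitting pathological maps.

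Finally I would connect to \cref{lem:rob_opt_mix}. With $\hat g:=\g\circ\Psi$, injectivity of $\g$ and bijectivity of $\Psi$ give that $\hat g$ is injective with $\Ima(\hat g)=\Ima(\g)$, so $\hat g^{-1}=\Psi^{-1}\circ\g^{-1}$ is defined on $\Ima(\g)$ and, for every $\*x\in\Ima(\g)$,
\[
  \hat f\bigl(\hat g^{-1}(\*x)\bigr)=\f\bigl(\Psi(\Psi^{-1}(\g^{-1}(\*x)))\bigr)=\f\bigl(\g^{-1}(\*x)\bigr)=\h(\*x).
\]
By \cref{lem:rob_opt_mix}, $\h=\f\circ\g^{-1}$ is the unique optimizer of \cref{eq:rob_opt_mix} on $\Ima(\g)$, hence $(\hat f,\hat g)$ attains the optimum as well; but $\hat g^{-1}=\Psi^{-1}\circ\g^{-1}$ with $\Psi$ nonlinear is in particular not of the form $\*D\*P\,\g^{-1}$, so it is not causally disentangled with respect to $\g^{-1}$ in the sense of \cref{def:disent}. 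This extends \cref{thm:imp1} to the case of linear $\f$.

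There is no hard analytic step. The two points to handle carefully are: building $\Psi$ fibrewise along the level sets of $\f$ so that it is a genuine bijection of all of $\mathbb R^d$, not merely injective; and pinning down the precise sense in which $\Psi$ is free (up to invertibility) on a $(d-1)$-dimensional subspace — that subspace is forced to be $\ker\f$, since a subspace on which $\f$ does not vanish would impose a linear constraint on $\Psi$ there and every other level set of $\f$ is only an affine subspace — so that $\Psi|_{\ker\f}$ may be an arbitrary bijection of $\ker\f$. The image bookkeeping needed to invoke \cref{lem:rob_opt_mix} is then routine.
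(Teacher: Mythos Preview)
Your proof is correct and follows essentially the same idea as the paper: both exploit that $\ker\f$ has dimension $d-1$ and that $\Psi$ may act arbitrarily (subject to invertibility) on this kernel while leaving $\f\circ\Psi$ linear. The paper presents this by first rotating $\theta$ to $(1,0,\dots,0)^T$ via an orthogonal map and then taking $\Psi(\*z)=(z_1,z_2^3,\dots,z_d^3)$, whereas you work coordinate-free via the splitting $\mathbb R^d=\ker\f\oplus\mathbb R\*v$; your version is a cleaner packaging of the same construction and additionally makes explicit that the free subspace is exactly $\ker\f$.
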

\begin{proof}
    We begin with an example which will later be generalized. Our goal is to elucidate if assuming \(\hat{f}\) and \(\f\) to be linear functions necessarily constrains \(\Psi\) to be a linear map, where \(\hat{f} := \f \circ \Psi^{-1}\). We write $\f(\mathbf{z}) = \theta^T \mathbf{z} $, where \(\theta \in \mathbb{R}^d\) and consider the case where \(\theta = (1, 0, \dots, 0)^T\). Consider any arbitrary invertible map \(\Psi: \mathbb{R}^d \rightarrow \mathbb{R}^d\) with components \(\psi_i\). We now write \(\hat{f}\) as
    \begin{align*}
        \hat{f} = \f \circ \Psi^{-1} = \theta_1 \psi^{-1}_1 + \dots + \theta_d \psi^{-1}_d = \psi^{-1}_1.
    \end{align*}
    For this particular choice of \(\theta\), constraining $\hat{f}$ to be linear amounts to constraining \(\psi^{-1}_1\) to be linear, while the other components \(\psi^{-1}_i; i \in [d]\setminus\{1\}\) are not constrained at all and can be chosen arbitratrily as long as \(\Psi\) remains invertible. For example, the map $\Psi(\mathbf{z}) = (z_1,z_2^3,\dots,z_d^3)$ does the job.

    For a general choice of \(\theta\) we can always find an orthonormal transformation \(\*A: \mathbb{R}^d \rightarrow \mathbb{R}^d\) that maps \((\theta_1, \dots, \theta_d)^T \mapsto (1, 0, \dots, 0)^T\). Consider a nonlinear map $\Psi_0$ for the initial case  \(\theta = (1, 0, \dots, 0)^T\), whose first component is linear. Define \(\Psi' := \*A^T \circ \Psi\) which remains nonlinear. We can now write
    \begin{align*}
        \hat{f}(\mathbf{z}) := \left(\f \circ \Psi \right)(\mathbf{z}) &= \theta^T \left( \*A^T \Psi_0(\mathbf{z})  \right) = (\*A \theta) \Psi_0(\mathbf{z}) =  (1, 0, \dots, 0)^T \Psi_0(\mathbf{z})
    \end{align*}
    Thus $\hat{f}$ is linear while $\Psi$ was only constrained to be linear in its first component.
\end{proof}
Through the generalized counterexample presented in the proof of \cref{thm:imp2}, we see that the linearity requirement on \(\f \circ \Psi^{-1}\) only constrains one dimension of \(\Psi\), namely the one orthogonal to the basis of \(\f\). We would need one such constraint for each dimension of \(\Psi\)---possibly by assuming \(\dim(Y) \geq d\) as in \citep{lachapelle_synergies_2023}---in order to draw the desired linearity conclusion.

Unfortunately, the---arguably strong---assumptions presented in this section still do not suffice to recover \(\g\) up to the desired equivalence class. In the following sections, we delineate possible steps forward, both in light of the presented impossibility results, as well as practical considerations.

\subsection{Practical considerations}\label{ssec:prac}
So far, we have considered an idealized setting with infinite data, perfect optimization and most importantly an infinite number of environments, stemming from interventions with arbitrary strength \(\*a \in \mathbb{R}^d\). While not uncommon assumptions in general, in practice we obviously do not have access to infinitely many environments. Rather, we consider a finite set of training environments, over which we formulate our invariance condition, in hopes of generalizing to unseen environments.

Training a model on finite support and generalizing outside of this support amounts to extrapolation. As \citet{christiansen_causal_2022} show, learning such \emph{extrapolating} nonlinear functions from data with bounded support necessarily requires strong assumptions on the function class. If we do not constrain \(\g\), even if \(\f\) is linear, the function \(\f \circ \g^{-1}\), that should generalize in the aforementioned sense, is still generally nonlinear. Linear functions however do have this desired extrapolation property, rendering them interesting candidates for generalization, also from a practical perspective.

\section{Outlook}
Given the proven insufficiency of the assumptions in \cref{ssec:overparam,ssec:lin_f} along with the practical limitations of learning nonlinear functions that generalize outside of the training data detailed in \cref{ssec:prac}, the framework of utilizing invariance as a learning signal for causal representations seems ill-equipped without additional inductive biases.

To achieve identifiability, one could impose the linearity constraint proposed in \cref{ssec:lin_f} on each dimension of \(\Psi\), however this constraint essentially amounts to directly assuming the linearity of \(\Psi\), which is the implication we are trying to show in the first place. Alternatively, by assuming \(\dim(Y) \geq d\) one could directly use the results of \citet{lachapelle_synergies_2023} to achieve identifiability, albeit without exploiting any kind of invariance or interventional data, which is the core motivation of this study.

In order to overcome the challenges staked out in this work, we propose to move forward by considering the case where \(\g\) is a linear map. Establishing theoretical results in the context of distributional robustness problems in conjunction with representation learning and out-of-distribution prediction seems to hinge on some type of linearity assumption \citep{peters_causal_2016, arjovsky_invariant_2020, rosenfeld_risks_2020, krueger_out--distribution_2021, eastwood_probable_2022, lachapelle_synergies_2023}, motivating the assumption of linear predictive mechanism \(\f\) in conjunction with a linear mixing function \(\g\). While directly assuming linearity of \(\g\) is a strong assumption, we argue that such a setting still bears practical merit. A number of works yield linearly mixed representations \citep{roeder_linear_2021, ahuja_interventional_2023, lachapelle_synergies_2023, saengkyongam_identifying_2023}, any of which can serve as the starting point for a method that considers linear mixing functions.

Additionally assuming linearity of \(\g\) does not directly lead to identifiability of the latent variables, as this does not alleviate the invariance of the solution of \cref{eq:rob_opt_mix} to reparametrizations by an invertible map \(\Psi\).  
However, we postulate that this linear setting allows us to utilize similar assumptions as other works that deal with linear mixing of the causal variables \citep{zhang_identifiability_2023, squires_linear_2023, buchholz_learning_2023, varici_score-based_2023, lachapelle_synergies_2023}, in order to establish both identifiability results and practical algorithms. A particular set of candidate assumptions that are being explored as part of ongoing work are sparsity constraints similar to those in \citep{lachapelle_disentanglement_2022, lachapelle_synergies_2023}.

\paragraph{Conclusion.} We have presented a theoretical investigation into the potential of predictive invariance, characteristic to causal models, as a signal for representation learning. We proposed a formal framework that allows us to approach this question and provide first impossibility results demarcating the strength of necessary assumptions towards identifiability. Finally, we shed light on further practical challenges that arise and propose constraints derived from these challenges, which we hypothesize will help to make progress towards utilizing invariance for latent causal variable recovery.

\section*{Acknowledgements}
We thank Tom Hochsprung for insightful discussions and comments, as well as the anonymous reviewers for their feedback.
This work has received funding from the European Research Council (ERC) Starting Grant CausalEarth under the European Union’s Horizon 2020 research and innovation program (Grant Agreement No. 948112). 

\bibliography{references}


\newpage
\appendix

\section{Proof of \cref{lem:rob_opt_mix}}\label{app:proof}
\setcounter{lemma}{1}
\begin{lemma}[]
     Assume the data generating process in \cref{sec:prob} and consider the  optimization problem described in \cref{eq:rob_opt_mix}. Let \(\mathcal{Q}^{(\textnormal{do})} := \bigr\{P^{(\textnormal{do})}_{\*a,[d]}; \*a \in \mathbb{R}^d\bigr\},\) i.e. the set of do-interventions on \emph{all} underlying variables \(\*Z\), except \(Y\), with arbitrary strength. Define \(h := f \circ g^{-1}: \mathbb{R}^p \to \mathbb{R}\) and let \(\Ima(\cdot)\) denote the image of a function.
    Then, the composed function
    \(\h := (\f \circ \g^{-1})\) is the unique optimizer of \cref{eq:rob_opt_mix} on \(\Ima(\g)\), i.e.
    \begin{align*}
        \h = (f_{\textnormal{causal}} \circ g^{-1}_{\textnormal{causal}}) \in \argmin_h \sup_{Q \in \mathcal{Q}^{(\textnormal{do})}} \mathbb{E}_{(\*Z, Y) \sim Q}\left[(Y - h(\*X))^2\right],
    \end{align*}
    and any other minimizer \(h'\) that satisfies \(h' \equiv \h\) on \(\Ima(\g)\).
\end{lemma}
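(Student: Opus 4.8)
The plan is to transfer the bias--variance style argument from the proof of \cref{lem:rob_opt}, with \(\f\) replaced by the composition \(\h = \f \circ \g^{-1}\) and the intervened vector \(\*Z\) replaced by \(\*X\), while keeping careful track of the fact that the adversary's do-interventions only ever place the argument of \(h\) inside \(\Ima(\g)\). Concretely, fix any candidate \(h : \mathbb{R}^p \to \mathbb{R}\) and any \(Q = P^{(\textnormal{do})}_{\*a,[d]} \in \mathcal{Q}^{(\textnormal{do})}\). Under such a \(Q\), the vector \(\*Z\) is deterministically equal to \(\*a\), so \(\*X = \g(\*Z) = \g(\*a)\) is a constant lying in \(\Ima(\g)\), and \(\varepsilon_Y\) is independent of \(\*X\). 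Using injectivity of \(\g\) to write \(Y = \f(\*Z) + \varepsilon_Y = \h(\*X) + \varepsilon_Y\), I would expand
\begin{align*}
    \mathbb{E}_Q[(Y - h(\*X))^2] = \mathbb{E}_Q[(\h(\*X) - h(\*X))^2] + \mathbb{E}_Q[\varepsilon_Y^2] + 2\,\mathbb{E}_Q[(\h(\*X) - h(\*X))\varepsilon_Y].
\end{align*}
Since \(\*X\) is constant under \(Q\) and \(\varepsilon_Y\) has mean zero, the cross term vanishes and the right-hand side equals \((\h(\g(\*a)) - h(\g(\*a)))^2 + \mathrm{Var}(\varepsilon_Y)\).

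Second, I would read off optimality and uniqueness from this identity. If \(h \equiv \h\) on \(\Ima(\g)\), then the expression equals \(\mathrm{Var}(\varepsilon_Y)\) for every \(Q \in \mathcal{Q}^{(\textnormal{do})}\), hence the supremum equals \(\mathrm{Var}(\varepsilon_Y)\); in particular \(\h\) itself attains this value. Conversely, if \(h\) differs from \(\h\) somewhere on \(\Ima(\g)\), there is a point \(\*b = \g(\*a_0) \in \Ima(\g)\) with \(h(\*b) \neq \h(\*b)\); choosing the environment \(Q_0 = P^{(\textnormal{do})}_{\*a_0,[d]}\) makes the objective equal \((\h(\*b) - h(\*b))^2 + \mathrm{Var}(\varepsilon_Y) > \mathrm{Var}(\varepsilon_Y)\), so \(\sup_{Q} \mathbb{E}_Q[(Y - h(\*X))^2] > \mathrm{Var}(\varepsilon_Y)\). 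Therefore every minimizer coincides with \(\h\) on \(\Ima(\g)\), and every function agreeing with \(\h\) on \(\Ima(\g)\) is a minimizer --- which is precisely the stated conclusion about \(\h\) ``and any other minimizer \(h'\) that satisfies \(h' \equiv \h\) on \(\Ima(\g)\).''

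The only delicate point --- and the reason the statement is phrased ``on \(\Ima(\g)\)'' rather than on all of \(\mathbb{R}^p\) --- is the domain bookkeeping: because interventions act on \(\*Z\) and not on \(\*X\), the constant value that \(\*X\) takes under a do-intervention is always of the form \(\g(\*a)\), so no environment constrains \(h\) outside \(\Ima(\g)\). I also rely on the standing injectivity assumption on \(\g\) from \cref{sec:prob}, both to define \(\g^{-1}\) on its image and to identify \(\f(\*Z)\) with \(\h(\*X)\). Apart from that, the argument is a routine adaptation of \cref{lem:rob_opt}, so I do not anticipate a genuine obstacle; the main care is simply in not overclaiming beyond \(\Ima(\g)\).
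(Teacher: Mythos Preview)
Your proposal is correct and follows essentially the same approach as the paper: the identical bias--variance decomposition of the squared loss, vanishing of the cross term under full do-interventions because \(\varepsilon_Y\) has mean zero and is independent of the (now constant) covariates, and then picking an intervention at a point \(\*b = \g(\*a_0) \in \Ima(\g)\) where \(h\) and \(\h\) differ to obtain the strict inequality. The only cosmetic difference is that you phrase the decomposition in terms of \(\h(\*X)\) rather than \(\f(\*Z)\), which is equivalent on \(\Ima(\g)\) by the injectivity of \(\g\) that you invoke.
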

\begin{proof}
    This proof largely follows the proof of \cref{lem:rob_opt}.

    Consider the decomposition of the objective
    \begin{align*}
        \mathbb{E}_Q\bigr[(Y - h(\*X))^2\bigr] =& \mathbb{E}_Q\bigr[(Y - h(\g(\*Z)))^2\bigr] \\
        = &\mathbb{E}_Q\bigr[(\f(\*Z) - h(\g(\*Z)) + \varepsilon_Y)^2\bigr] \\
        = &\mathbb{E}_Q\bigr[(\f(\*Z) - h(\g(\*Z)))^2\bigr] + \mathbb{E}_Q\bigr[\varepsilon^2_Y\bigr] \\
        &+ 2\mathbb{E}_Q\bigr[(\f(\*Z) - h(\g(\*Z)))\varepsilon_Y\bigr].
    \end{align*}
    Again, for any interventional distribution \(Q \in \mathcal{Q}^{(\text{do})}\) where we intervene on all \(Z_j\)s, we see that \(\varepsilon_Y\) is independent of all variables \(Z_j\) and the final term of the above decomposition vanishes
    \begin{align*}
        2\mathbb{E}_Q\bigr[(\f(\*Z) - h(\g(\*Z)))\varepsilon_Y\bigr] &= 2 (\f(\*a) - h(\g(\*a))) \mathbb{E}_Q\bigr[\varepsilon_Y\bigr] \\
        &=  2 (\f(\*a) - h(\g(\*a))) \mathbb{E}_{P_{\varepsilon_Y}}\bigr[\varepsilon_Y\bigr] \\
        &= 0,
    \end{align*}
    where we use the fact that \(\varepsilon_Y\) has mean zero. Hence, for any choice of \(h\) the supremum is always larger or equal to \(Var(\varepsilon_Y)\).

    Focusing our attention on the first term of the decomposition presented above, suppose \(h \neq \h\), i.e. there exists a choice of \(\*b\) s.t. \(h(\*b) \neq \h(\*b)\). Recall that \(h\) takes \(\*X\) as its argument, but we cannot directly intervene on \(\*X = \g(\*Z)\), only on \(\*Z\). 
    
    Therefore, assume \(\*b \in \Ima(\g)\) and consider an interventional distribution  \(Q\) where we choose \(\*a\) s.t. \(\*b = \g(\*a)\). Then
    \begin{align*}
        \mathbb{E}_Q\bigr[(\f(\*Z) - h(\g(\*Z)))^2\bigr] &= \mathbb{E}_Q\bigr[(\f(\*a) - h(\*b))^2\bigr] \\
        &= \mathbb{E}_Q\bigr[(\h(\g(\*a)) - h(\*b))^2\bigr] \\
        &= \mathbb{E}_Q\bigr[(\h(\*b) - h(\*b))^2\bigr] \\
        &>0,
    \end{align*}
    where we used \(\f = \h \circ \g\) in the second line.

    Conversely, for any function \( h\) that coincides with \(\h\) on \(\Ima(\g)\) the inequality above becomes an equality with zero, rendering any such function an optimizer for the considered problem.
\end{proof}

\section{Detailed related work}
\paragraph{Invariance, distributional robustness and causality.} As detailed in the main text, the principle of invariance is closely linked to ideas from causality. The first work that proposed to use this invariance principle to learn causal structures from data---and kicked off a range of subsequent works---was Invariant Causal Prediction (ICP) by \citet{peters_causal_2016}, where the fact that a predictive model conditioned on all parents of a target \(Y\) is invariant under interventions is used to find the parents of said target. \citet{pfister_learning_2019} show the merit of using invariance as a signal for selecting robust prediction models for real-world biological data.
\citet{magliacane_domain_2018} utilize invariance and the Joint Causal Inference framework \citep{mooij_joint_2020} to find features that lead to transferable predictions across contexts, without relying on knowledge of the causal graph or specific types of interventions.
\citet{buhlmann_invariance_2018} and \citet{meinshausen_causality_2018} connect causality and distributional robustness and propose an alternative view where causal structures and models are defined as those that induce invariance. In Anchor Regression, \citet{rothenhausler_anchor_2021} propose a regression model that solves a distributional robustness problem by enforcing invariance to a specific type of shift interventions. \citet{christiansen_causal_2022} characterize various distributional robustness problems, investigating the influence of the functional class of the prediction model and whether interventions extend the support of training data.

\paragraph{OOD generalization.} Distributional robustness problems such as \cref{eq:rob_opt,eq:rob_opt_mix} are closely related to out-of-distribution (OOD) generalization in machine learning and \citet{rojas-carulla_invariant_2018} show that predictors that use the direct causes of a target are optimal for certain OOD problem settings. Another line of works, Risk Extrapolation (REx) \citep{krueger_out--distribution_2021} and Quantile Risk Minimization (QRM) \citep{eastwood_probable_2022}, assume a slightly different type of invariance, namely that of the risk \(\mathcal{R} := \mathbb{E}[\ell(Y, \*X)]\), where \(\ell\) is some loss function. REx proposes to extrapolate the convex hull of risks encountered in training to achieve robustness in test time. QRM also posits invariance of risks, but assumes a probabilistic point of view and aims not to find worst-case predictors, but those that perform well with high probability. Under additonal technical assumptions, both approaches prove that they can recover the causes of a target \(Y\) in a linear SCM, if all causal variables are directly observed.

Invariant Risk Minimization (IRM) \citep{arjovsky_invariant_2020} is another approach to OOD generalization, specifically geared towards formalizing this problem in the context of machine learning. As such, the authors consider a similar setting to ours, where we assume some underlying latent variables that permit the formulation of an invariant predictor, together with a function that maps these latents to the observations we have access to. IRM however is not interested in representation learning, as we are, but is solely aimed at learning invariant prediction models. Beyond showing, for the linear case, that IRM can separate the part of the representation that permits an invariant predictor from those parts that do not, the authors do not provide theoretical results pertaining to the identification of the underlying variables. For the nonlinear setting, follow up works have demonstrated that IRM is ill-equipped at learning predictors that perform OOD generalization \citep{kamath_does_2021,rosenfeld_risks_2020}, echoing the considerations we bring forth in \cref{ssec:prac}.

\paragraph{Causal representation learning.} 
Initiated by the famous impossibility result of nonlinear independent component analysis (ICA) \citep{hyvarinen_nonlinear_1999}, unsupervised representation learning has been shown to be too underconstrained to be solved without additional inductive biases \citep{locatello_challenging_2019}. Later works on the identifiability of ICA problems have been able to overcome this initial obstacle by exploiting various types of auxiliary assumptions \citep{hyvarinen_nonlinear_2017,halva_hidden_2020, gresele_independent_2021,morioka_connectivity-contrastive_2023}, and more recent works in causal representation have followed this example, too.
Often, heterogeneity of the observed data distribution is made, e.g. arising from counterfactual pairs \citep{brehmer_weakly_2022} or interventions, specifically hard do-interventions as in \citep{ahuja_interventional_2023} or soft interventions as in \citep{zhang_identifiability_2023}. 
Other approaches shift their focus to the mixing function that transforms the underlying causal variables, with one family of works focusing on how to deal with linear mixtures.
\citet{squires_linear_2023} consider linear mixtures of linear SCMs, \citet{buchholz_learning_2023} generalize this result to nonparametric SCMs, and \citet{varici_score-based_2023} focus on learning the representation of linear mixtures of causal variables via a score based approach. Alternatively, some works look to exploit temporal structures, such as \citet{lippe_citris_2022,lippe_causal_2022,lippe_biscuit_2023} who use knowledge of interventions to achieve identifiability, \citet{yao_learning_2021,yao_temporally_2022} who exploit nonstationarity or \citet{lachapelle_disentanglement_2022} who use an assumption about the sparsity of the SCM that generates the data.

Another approach that exploits a specific sparsity assumption is given by \citet{lachapelle_synergies_2023}. Similar to our setting, the synergy of representation learning with a prediction problem is explored. Instead of considering interventions that induce heterogeneity in the data, this framework assumes multiple prediction tasks where a single, underlying representation contains the potential predictors for each individual task. 
This setting alone only suffices to recover the latent variables up to linear mixing, but by imposing an additional sparsity constraint, the latents are shown to be causally disentangled (cf. \cref{def:disent}).

Works considering latent DAG structure learning are also in spirit related to our setting, as they commonly assume to observe at least some nodes within the graph they aim to learn, similar to how we assume observations of the target \(Y\). 
The main assumption for most works in this setting is the pure children assumption \citep{silva_learning_2006}, or some variation thereof. This assumption postulates that all observed variables have only a single latent variable as a parent. \citet{cai_triad_2019} deal with linearly transformed latent variables that have two pure children, subsequently generalized by \citet{xie_generalized_2020} to permit more than two pure children per latent in the form of the so-called Generalized Independent Noise condition. The same authors' later work further generalizes this setting to facilitate the learning of hierarchical latent variable models \citep{xie_identification_2022}.
While our approach also assumes the observation of an effect of the underlying SCM in the form of \(Y\), the main difference to the aforementioned works lies in the modelling choice of the remaining observed variables, \(\*X\). In our case, we consider \(\*X\) to come from an injective transformation of the underlying variables \(\*Z\), and therefore not to be causal variables of any SCM directly, while the methods mentioned above consider all observed variables \(\*X\) to be part of the underlying, latent SCM and consequently model their relation to the latent variables \(\*Z\) in terms of surjective causal mechanisms. Given this difference, we do not require assumptions on the mechanism of \(Y\) similar to those in \citep{xie_generalized_2020} and related works pertaining to the number of observed children of latents \(\*Z\) or the linearity of the transform that yields \(\*X\).

\end{document}